\documentclass{article}
\usepackage{float}

\usepackage[preprint,nonatbib]{neurips_2024}

\usepackage[utf8]{inputenc} 
\usepackage[T1]{fontenc}    
\usepackage{hyperref}       
\usepackage{url}            
\usepackage{booktabs}       
\usepackage{amsfonts}       
\usepackage{amsthm}         
\usepackage{nicefrac}       
\usepackage{microtype}      
\usepackage{xcolor}         

\usepackage[T1]{fontenc}
\usepackage{graphicx}
\usepackage{booktabs}
\usepackage{subcaption}
\usepackage{algorithm}
\usepackage{algorithmic}
\usepackage{tabularx}
\usepackage[misc]{ifsym}

\usepackage{mwe}
\usepackage{amssymb, amsmath}
\newtheorem{theorem}{Theorem}
\usepackage{xcolor}  
\usepackage{multicol}
\usepackage{hyperref}

\title{Unlearning-based sliding window for continual learning under concept drift}

\author{%
Michał Woźniak \\
Wroclaw University of Science and Technology\\
Wrocław, Poland\\
ORCID: 0000-0003-0146-4205\\
\texttt{michal.wozniak@pwr.edu.pl}\\
\AND
Marek Klonowski\\
Wroclaw University of Science and Technology\\
Wrocław, Poland\\
ORCID: 0000-0002-3141-8712\\
\texttt{marek.klonowski@pwr.edu.pl}\\
\AND
Maciej Mączynski\\
Wroclaw University of Science and Technology\\
Wrocław, Poland\\
ORCID: 0009-0006-0354-4507\\
\texttt{maciej.maczynski@pwr.edu.pl}\\
\AND
Bartosz Krawczyk\\
Rochester Institute of Technology\\
NY, USA\\
ORCID: 0000-0002-9774-0106\\
\texttt{bartosz.krawczyk@rit.edu}\\
}

\begin{document}

\maketitle

\begin{abstract}
Traditional machine learning assumes a stationary data distribution, yet many real-world applications operate on nonstationary streams in which the underlying concept evolves over time. This problem can also be viewed as task-free continual learning under concept drift, where a model must adapt sequentially without explicit task identities or task boundaries. In such settings, effective learning requires both rapid adaptation to new data and forgetting of outdated information. A common solution is based on a sliding window, but this approach is often computationally demanding because the model must be repeatedly retrained from scratch on the most recent data. We propose a different perspective based on machine unlearning. Instead of rebuilding the model each time the active window changes, we remove the influence of outdated samples using unlearning and then update the model with newly observed data. This enables efficient, targeted forgetting while preserving adaptation to evolving distributions. To the best of our knowledge, this is the first work to connect machine unlearning with concept drift mitigation for task-free continual learning. Empirical results on image stream classification across multiple drift scenarios demonstrate that the proposed approach offers a competitive and computationally efficient alternative to standard sliding-window retraining. Our implementation can be found at \href{https://anonymous.4open.science/r/MUNDataStream-60F3}{https://anonymous.4open.science/r/MUNDataStream-60F3}.

\end{abstract}

\section{Introduction}

Task-free continual learning~\cite{ye2022taskfree} addresses scenarios in which data arrive continuously in the form of a stream, and the underlying distribution may evolve over time. In such nonstationary environments, \emph{concept drift} can substantially degrade predictive performance unless the learner continuously adapts to newly emerging patterns while reducing the influence of outdated information~\cite{Ashrafee:2026}. Existing adaptation strategies are commonly based on explicit drift detection, online adaptation, or sliding-window retraining~\cite{cano2022rose,Krawczyk:2017}, where the model is updated using only the most recent portion of the stream. While effective, these mechanisms often require repeated retraining and careful window-size selection, which can increase computational cost and limit scalability.

At the same time, recent advances in \emph{machine unlearning}~\cite{nguyen2022surveymachineunlearning} have shown that it is possible to approximately remove the effect of selected training data from an already trained model without retraining it from scratch. However, unlearning has been studied almost exclusively in the context of privacy, safety, and data removal requests, rather than as a mechanism for adaptive learning under concept drift. 

\noindent \textbf{Research objective.} There exists a gap between the task-free continual learning and machine unlearning: although both are concerned with controlling the influence of past data on a model, their connection remains largely unexplored. To address this gap, we propose to reinterpret forgetting under concept drift through the lens of machine unlearning, using it as an efficient alternative to repeated sliding-window retraining.

\noindent \textbf{Main contributions.} To the best of our knowledge, this is the first work to investigate machine unlearning as a mechanism for mitigating concept drift in data streams. Our main contributions are as follows:
\begin{itemize}
    \item We introduce \emph{UIL} (\textbf{U}nlearned and \textbf{I}teratively trained c\textbf{L}assifier), a novel framework that mitigates concept drift by applying machine unlearning to simulate tailored forgetting of obsolete data, thereby providing an alternative to computationally demanding sliding-window retraining.
    
    \item We provide a theoretical analysis showing that, under the considered assumptions, unlearning outdated data followed by incremental adaptation can approximate the predictive behavior of retraining from scratch while requiring substantially lower computational cost.
    
    \item We conduct illustrative experiments on image classification benchmarks with several types of sudden concept drift, demonstrating the practical behavior of the proposed approach in controlled nonstationary settings.
\end{itemize}


    


\section{Related works}

\noindent \textbf{Learning from streaming data.} Recent works increasingly narrow the gap between \emph{data stream mining} and \emph{task-free continual learning} by studying learning from non-stationary streams without explicit task identities or boundaries. In task-free and online continual learning, Ye and Bors~\cite{ye2022taskfree}, Ghunaim et al.~\cite{ghunaim2023realtime}, and Huang et al.~\cite{huang2024las} formulate continual adaptation as sequential learning over evolving data, emphasizing the role of replay, realistic evaluation, and robustness to time-varying class priors. In parallel, recent stream-mining works such as MCD-DD~\cite{wan2024mcddd} and ReCDA~\cite{yang2024recda} study drift-aware representation learning and adaptation directly in streaming settings. Taken together, these works support a unified view in which concept-drifting streams can be interpreted as task-free continual learning processes governed by evolving distributions rather than predefined task switches~\cite{Ashrafee:2026}.

\noindent \textbf{Learning with sliding windows.} \emph{Sliding-window} mechanisms remain one of the most practical ways to localize adaptation to the most relevant recent history. In the data-stream literature, ROSE~\cite{cano2022rose} uses class-wise windows to maintain robust online ensembles under joint class imbalance and drift, while MCD-DD~\cite{wan2024mcddd} employs window-based drift monitoring in high-dimensional streams. Recent work on feature streams also combines windowing with explicit change analysis, as in Zhou et al.~\cite{zhou2024concept}. In online continual learning, Huang et al.~\cite{huang2024las} use a batch-wise sliding-window estimator to track time-varying class priors, showing that short-horizon statistics can substantially improve stability in non-stationary streams. 


\noindent \textbf{Unlearning approaches.} \emph{Machine unlearning}~\cite{nguyen2022surveymachineunlearning} aims to remove the influence of selected data points, classes, or concepts from a trained model without retraining it from scratch. This capability is important in settings where data must be removed due to annotation errors, revoked usage rights, privacy concerns, adversarial contamination, or harmful content, while full retraining may be computationally prohibitive or impossible because the original data are no longer fully accessible. Existing methods can be broadly divided into \emph{exact} and \emph{approximate} approaches. Exact methods, such as SISA~\cite{DBLP:conf/sp/BourtouleCCJTZL21} and ARCANE~\cite{DBLP:conf/ijcai/YanLG0L022}, provide stronger removal guarantees but often require specialized procedures or substantial overhead. Approximate methods~\cite{DBLP:journals/tifs/ChundawatTMK23,Guo:2020} are more practical and scalable, but trade computational efficiency for imperfect removal. Since the notion of \emph{correct} unlearning remains difficult to define and evaluate rigorously~\cite{DBLP:conf/uss/ThudiJSP22}, we do not study certified removal guarantees here. Instead, following recent critiques of unlearning as a universal deletion mechanism~\cite{Cooper:2025}, we treat approximate unlearning as a practical tool for \emph{simulated forgetting} in continual learning under concept drift.

\section{Unlearned and Iteratively Trained classifier}

\subsection{Problem statement}

We present nonstationary stream learning as a \emph{task-free continual learning} problem. The data stream is given as a sequence of chunks \(DS=(D_1,D_2,\dots)\), where each chunk \(D_k=\{(x_i^k,y_i^k)\}_{i=1}^{m}\) contains \(m\) labeled samples drawn from a time-dependent distribution \(P_k(X,Y)\). The learner receives chunks sequentially, without access to task labels or explicit task boundaries, and updates the model parameters $\theta$ online according to:
\begin{equation}
\theta_{k+1}=\mathcal{A}(\theta_k,D_k),
\end{equation}
\noindent where $\mathcal{A}$ stands for the incremental update rule. The goal is to maintain low expected loss
$\mathbb{E}_{(x,y)\sim P_k}[\ell(f_\theta(x),y)]$ on the current stream distribution for a selected loss function $\ell$,
\noindent while preserving useful information from recent history and adapting to new concepts. Concept drift occurs whenever the stream distribution changes over time, i.e.,
\begin{equation}
P_k(X,Y)\neq P_{k+\tau}(X,Y), \qquad \text{for some } \tau>0,
\end{equation}
\noindent which captures changes in \(P(X)\), \(P(Y\mid X)\), or both. Under this view, concept-drifting stream learning is a task-free continual adaptation problem rather than a sequence of explicitly separated tasks.


\subsection{Proposed approach}

One popular approach in stream learning is the sliding window method, which ensures that only objects from a selected time window are used to build the model. Despite the many algorithms that use this approach, which mainly focus on selecting the window appropriately, including adapting its length to changes in the distribution, a significant problem remains: they require retraining the model on a new data window. The proposed approach significantly reduces the cost of updating learning models by leveraging machine unlearning and incremental retraining.

Let us present the main idea of \emph{UIL} algorithm by comparing it with the classic Sliding Window (\emph{SW}) algorithm. \emph{SW} learns on a data window consisting of $L$ data chunks. Until the full window is received, the classifier trains incrementally. Once the window is full and a new chunk arrives, the model is trained from scratch on the last $L$ data chunks.
\emph{UIL} behaves the same until the window buffer is filled. However, when a new data chunk arrives, instead of retraining the model from scratch, it unlearns the oldest chunk and then retrains on the new one. The pseudocodes of \emph{SW} and \emph{UIL} are shown in Alg. 1 and 2, respectively, with the lines that differ between the two algorithms highlighted. In addition, to better illustrate how \emph{UIL} works, we present its detailed pipeline in Fig. \ref{fig:uil}.

\begin{figure}
    \centering
    \includegraphics[width=0.8\linewidth]{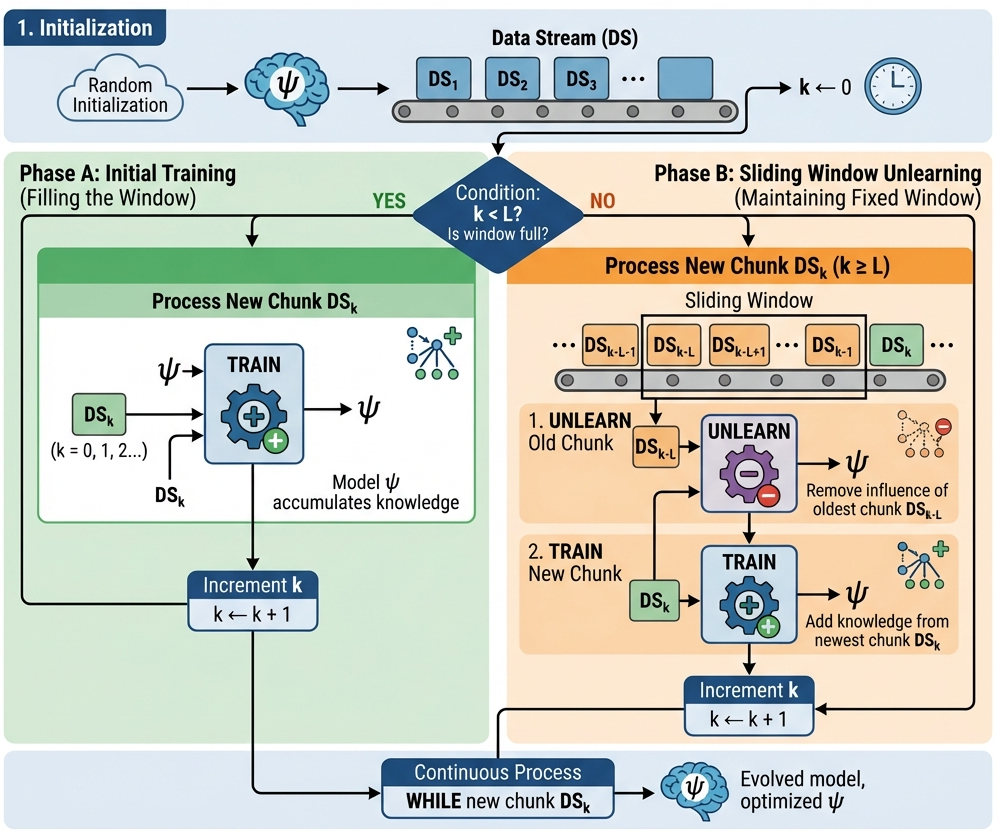}
    \caption{The idea of UIL (Unlearned and Iteratively trained cLassifier)}
    \label{fig:uil}
\end{figure}

\begin{algorithm}
\caption{(\emph{UW}) Training a stream classifier using a sliding window}
\label{alg1:sliding}
\begin{algorithmic}[1]

\REQUIRE window length $L$, data stream $DS = DS_1, DS_2, \dots $
\ENSURE trained model $\psi$

\STATE $k \leftarrow 0$
\STATE Initate randomly $\psi$

\WHILE{new chunk $DS_k$}
    \IF{$k < L$}
        \STATE $\psi \leftarrow \text{Train}(\psi, DS_k)$
    \ELSE
        \STATE \textcolor{red}{Reset model $\psi$}
        \STATE \textcolor{red}{$\psi \leftarrow \text{Train}\big(DS_{k-L+1}, \dots , DS_k\big)$}
    \ENDIF
    \STATE $k \leftarrow k + 1$
\ENDWHILE
\end{algorithmic}
\end{algorithm}

\begin{algorithm}
\caption{(\emph{UIL}) Training a  stream classifier using a machine unlearning}
\label{alg2:unlearning}
\begin{algorithmic}[1]

\REQUIRE window length $L$, data stream $DS = DS_1, DS_2, \dots $
\ENSURE trained model $\psi$

\STATE $k \leftarrow 0$
\STATE Initiate randomly $\psi$

\WHILE{new chunk $DS_k$}
    \IF{$k < L$}
        \STATE $\psi \leftarrow \text{Train}(\psi, DS_k)$
    \ELSE
        \STATE \textcolor{red}{$\psi \leftarrow \text{Unlearn}\big(\psi, DS_{k-L})$}
        \STATE \textcolor{red}{$\psi \leftarrow \text{Train}\big(\psi, DS_k\big)$}
    \ENDIF
    \STATE $k \leftarrow k + 1$
\ENDWHILE
\end{algorithmic}
\end{algorithm}



\section{Analytical evaluation}




We should answer the following research questions to assess the \emph{UIL} quality in relation to \emph{SW}:
\begin{itemize}
    \item[RQ1] What is \emph{UIL} predictive performance in relation to \emph{SW} when data distribution changes (concept drift appears)?
    \item[RQ2]  What is \emph{UIL} predictive performance in relation to \emph{SW} in the case of a stationary distribution, i.e., how will catastrophic forgetting affect the possible deterioration of the model, and is it possible to formulate the conditions that should be met for the model to be stable?
    \item[RQ3] What is the \emph{UIL} computational complexity compared to the \emph{SW}?
\end{itemize}

Firstly, let us present the basic definitions and assumptions.

The processed data stream $DS=(D_1, D_2, ...)$ is a sequence of chunks of equal length $m$, called the chunk size. The $i$th chunk is a  sequence of pairs $D_i=((x_1^{i}, y_1^i), (x_2^i, y_2^i), ...,(x_m^i, y_m^i))$, where $x_k^i$ represents the feature vector values of the $k$th sample in the $i$th chunk, while $y_k^i$ denotes its corresponding label.

\emph{SW} uses a window $W_t=(D_{t+1}, D_{t+2}, ..., D_{t+L})$ that consists of $L$ data chunks.

Let the model (classifier) be represented by its parameters $\theta$. Let's $\text{Train}$ denote the training algorithm that can train the model from scratch, e.g., on the data window $W_t$ 
\begin{equation*}
    \theta_t \leftarrow \text{Train}(W_t)
\end{equation*}
\noindent
and can also update the model $\theta$ with a given data chunk $D_t$
\begin{equation*}
    \theta_t \leftarrow \text{Train}(\theta_{t-1}, D_t)~.
\end{equation*}

Let $\text{Unlearn}$ denote the unlearning algorithm. 
\begin{equation*}
    \theta_t \leftarrow \text{Unlearn}(\theta_{t-1}, D_t).
\end{equation*}

Let $\ell(\theta) := \ell(\cdot, \theta)$ be the loss function. We assume that $\ell(\cdot, \theta)$ is $\mu$-strongly convex with respect to $\theta$, i.e., for all $\theta, \theta'$:
\begin{equation*}
    \ell(\theta') \geq \ell(\theta) + \langle \nabla \ell(\theta), \theta' - \theta \rangle + \frac{\mu}{2} \|\theta' - \theta\|^2.
\end{equation*}
Additionally, we assume that the loss function is $\beta$-smooth. A common consequence of this, often used in analysis, is that for all $\theta, \theta'$:
\begin{equation*}
    \ell(\theta) - \ell(\theta') \leq \langle \nabla \ell(\theta'), \theta - \theta' \rangle + \frac{\beta}{2} \|\theta - \theta'\|^2.
\end{equation*}
These are widely adopted assumptions (see e.g.,~\cite{Rakhlin:2012}).

\subsection{Evaluating the performance for nonstationary data stream}

To answer the first research question about the \emph{UIL} behaviour for the nonstationary data stream, let us consider the problem of the sudden drift, i.e, when the data distribution changes rapidly. To simplify the consideration, we assume that all samples in a given data chunk are from the same distribution. 

Let the first $L$ chunks be drawn from the distribution $P_{old}$ and the successive chunks from $P_{new}$.

\noindent Initial State ($t=0$): The active window is $W_0 = (D_1, D_2, \dots  ,D_L)$, and all chunks are drawn from the distribution $P_{old}$. The base model $\theta_0$ is trained on the whole $W_0$.

\noindent Transition Step $t$ (for $t=1 \dots L$): The active window $W_t=(D_{t+1}, D_{t+2}, ..., D_{t+L})$. The sliding window \emph{SW} trains the model $\theta_t^*$ that is perfectly retrained on $W_t$. \emph{UIT} unlearns $D_t$ from $\theta_0$ and and incrementally update model $\theta_{t-1}$ by $D_{t+L}$ at step $t$, i.e., 

\begin{align*}
    \theta_t &\leftarrow \text{Train}(\text{Unlearn}(\theta_{t-1}, D_t), D_{t+L}), \\
    \theta_t^* & \leftarrow \text{Train}(W_t). 
\end{align*}

Note that every time \emph{UIL}  may introduce some error induced by imperfect unlearning (\text{Unlearn} procedure), inaccuracies, and catastrophic forgetting
(\text{Train} procedure). 

The first error arises from the fact that, in practice, effective unlearning of $D_t$ requires the use of approximate unlearning algorithms, which inevitably lead to a loss of model accuracy. This stems, among other factors, from the need to employ approximate Hessian inversion, since the exact algorithm is computationally inefficient.  As a result, the model weights deviate slightly from the values they would reach under perfect retraining without $D_t$. We denote this inaccuracy as $\epsilon_{\text{unlearn}}$.

After unlearning $D_t$, we gradually train the model on the new data chunk $D_{L+t}$. However, during the optimization procedure, gradients based on the new chunk $D_{L+t}$ are analysed, not those on the retained chunks ($D_{t+1}, \dots ,D_{L+t-1}$). The new data causes a slight deviation of the model weights from the optimal configuration for the retained chunks. The model forgets some of the retained knowledge associated with the remaining chunks in $W_t$ (catastrophic forgetting). This parameter deviation is $\epsilon_{\text{forgetting}}$.
We will bound the accumulated error by~$\delta$:



\begin{equation}
\label{eq:delta}
\|\theta_t - \theta_t^*\|_2 \leq \|\theta_{t-1} - \theta_{t-1}^*\|_2 + \delta.    
\end{equation}
\noindent

\noindent
Clearly $\delta = \epsilon_{\text{unlearn}} + \epsilon_{\text{forgetting}}$.


\begin{theorem}

Let $\mathbb{E}_{z \sim P_{new}}[\ell(z, \theta)]$ be the expected loss on the new distribution $P_{new}$ for a model $\theta$. After $L$ steps, the window consists entirely of new data ($W_L = (D_{L+1} \dots D_{2L})\text{ and } D_i \sim P_{new} \text{ for } i =L+1, \ldots 2L. $).  We can observe that  the expected loss for  \emph{UIL} (i.e. $\theta_L$) is bounded as follows:

\begin{equation}
\mathbb{E}_{z \sim P_{new}}[\ell(z, \theta_L)]
 \leq 
\mathbb{E}_{z \sim P_{new}}[\ell(z, \theta_L^*)]
+ \frac{\beta}{2}(L \cdot \delta)^2 ~. 
\end{equation}

\end{theorem}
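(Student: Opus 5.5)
The proof has two steps: bound the parameter gap after $L$ transition steps using the recursion (\ref{eq:delta}), then turn that parameter bound into a bound on the expected loss using $\beta$-smoothness around the retrained model $\theta_L^*$.

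\textbf{Step 1: parameter gap.} Unroll (\ref{eq:delta}) from $t=L$ down to $t=1$:
\[
\|\theta_L-\theta_L^*\|_2 \le \|\theta_0-\theta_0^*\|_2 + L\delta .
\]
At $t=0$ both procedures coincide. The base model $\theta_0$ is trained on the whole of $W_0$, which is exactly $\theta_0^*=\text{Train}(W_0)$. Hence $\|\theta_0-\theta_0^*\|_2=0$ and $\|\theta_L-\theta_L^*\|_2\le L\delta$.

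\textbf{Step 2: from parameters to loss.} Let $F(\theta):=\mathbb{E}_{z\sim P_{new}}[\ell(z,\theta)]$. Since each $\ell(z,\cdot)$ is $\beta$-smooth, $F$ is $\beta$-smooth by linearity of expectation. Apply the smoothness inequality with $\theta=\theta_L$ and $\theta'=\theta_L^*$:
\[
F(\theta_L)\le F(\theta_L^*)+\langle\nabla F(\theta_L^*),\theta_L-\theta_L^*\rangle+\tfrac{\beta}{2}\|\theta_L-\theta_L^*\|_2^2 .
\]
To get the stated bound, the linear term must vanish. I will interpret $\theta_L^*$ as the minimizer of the risk on the window $W_L$. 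The window consists entirely of samples drawn from $P_{new}$, and in the idealized ("perfectly retrained") regime the paper adopts, this minimizer coincides with the minimizer of $F$. Then $\nabla F(\theta_L^*)=0$. If $\theta_L^*$ is read as a minimizer of $F$ over a convex constraint set, the first-order optimality condition $\langle\nabla F(\theta_L^*),\theta-\theta_L^*\rangle\ge 0$ points the wrong way and does not remove the term, so $\theta_L^*$ must be taken as a stationary point. Under strong convexity this stationary point is unique. With $\nabla F(\theta_L^*)=0$, Step 1 gives
\[
F(\theta_L)\le F(\theta_L^*)+\tfrac{\beta}{2}(L\delta)^2 ,
\]
which is the claim.

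\textbf{Main obstacle.} The delicate step is justifying $\nabla F(\theta_L^*)=0$, i.e.\ identifying the retrained window optimum with the population optimum under $P_{new}$. I would state this explicitly as part of the "perfect retraining" idealization and note how to relax it. If only the empirical window risk is minimized, one can bound the linear term by $\|\nabla F(\theta_L^*)\|_2\,L\delta$. This leaves an extra generalization term that vanishes as the window size $Lm$ grows, for example via a uniform concentration bound on the gradients.

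The remaining ingredients are routine. Expectation preserves $\beta$-smoothness. The recursion (\ref{eq:delta}) is taken as given, with $\delta=\epsilon_{\text{unlearn}}+\epsilon_{\text{forgetting}}$ uniform across steps.
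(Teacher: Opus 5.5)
Your proposal is correct and follows the paper's proof essentially step for step. You unroll the recursion from $\theta_0=\theta_0^*$ to get $\|\theta_L-\theta_L^*\|_2\le L\delta$, then apply $\beta$-smoothness around $\theta_L^*$ and drop the linear term by treating $\theta_L^*$ as the optimum for $P_{new}$. The paper justifies that last step only approximately ($\nabla \approx 0$), so your explicit assumption that $\theta_L^*$ is a stationary point of the population risk under $P_{new}$ is exactly the idealization the paper's argument tacitly relies on.
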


\begin{proof} 
At $t=1$, the objective function for the ideal model shifts from minimizing $P_{old}$ entirely, to minimizing a mixture: $\frac{L-1}{L} P_{old} + \frac{1}{L} P_{new}$.
When the unlearning algorithm (e.g., a Newton step) removes $D_1$, it subtracts the gradient of $D_1$ multiplied by an approximation to the inverse Hessian. Because $D_1 \sim P_{old}$, this mathematical operation deliberately weakens the model's confidence in the $P_{old}$ landscape. Analytically, the model is pushed out of the minimum of $P_{old}$ and begins sliding toward the minimum of $P_{new}$. The parameter distance between the proposed model $\theta_1$ and the perfectly retrained model (i.e., used \emph{SW}$)~\theta_1^*$ is bounded by the single-step error $\delta$ shown in eq. \ref{eq:delta}.

As $t$ progresses from $1$ to $L$, this process is repeated. At each step, \emph{UIL} unlearns another chunk of $P_{old}$ and learns another chunk of $P_{new}$. By step $L$, the influence of $P_{old}$ is to be entirely removed from the window.
However, because \emph{UIL} uses the previously adapted model $\theta_{t-1}$ as the starting point for step $t$, the approximation errors stack. Since 
$\|x + y\| \leq \|x\|+ \|y\|$ we may bound the accumulated parameter distance recursively:

\begin{equation}
\|\theta_t - \theta_t^*\|_2 \leq \|\theta_{t-1} - \theta_{t-1}^*\|_2 + \delta~.
\end{equation}

Unrolling this recursion from $t=1$ to $t=L$ (starting from $\theta_0 = \theta_0^*$, meaning zero initial error), we get the maximum accumulated parameter distance after a full window shift

\begin{equation}
\label{eq:accumulation}
\|\theta_L - \theta_L^*\|_2 \leq L \cdot \delta~.
\end{equation}

At step $L$, $W_L$ includes only samples drawn from $P_{new}$. Therefore, the ideal model $\theta_L^*$ is the optimal model.

Because, as we assumed, the loss function is $\beta$-smooth, we may apply the second-order Taylor expansion around the ideal model $\theta_L^*$

\begin{equation}
\begin{split}
\mathbb{E}_{z \sim P_{new}}[\ell(z, \theta_L)] 
&\leq \mathbb{E}_{z \sim P_{new}}[\ell(z, \theta^{*}_L)] \\
&\quad + \nabla \left( \mathbb{E}_{z \sim P_{new}}[\ell(z, \theta^{*}_L)] \right)^T (\theta_L - \theta_L^*) + \frac{\beta}{2} \|\theta_L - \theta_L^*\|_2^2~.
\end{split}
\end{equation}

\noindent 
Since $\theta^{*}_L$ is a model minimising the loss function, we may assume $$\nabla \left (\mathbb{E}_{z \sim P_{new}}[\ell(z, \theta^{*}_L)]\right ) \approx 0.$$ Finally, substituting the accumulated distance bound from eq.\ref{eq:accumulation} we get:

\begin{equation}
\mathbb{E}_{z \sim P_{new}}[\ell(z, \theta_L)]
\leq \mathbb{E}_{z \sim P_{new}}[\ell(z, \theta^{*}_L)] + \frac{\beta}{2} \|\theta_L - \theta_L^*\|_2^2~.
\end{equation}


\end{proof} 


 

\subsection{Evaluating performance for a stationary data stream}
Let us focus on the second research question, i.e., to evaluate \emph{UIL} predictive performance in relation to \emph{SW} in the case of a stationary distribution, i.e., how catastrophic forgetting can affect the possible deterioration of the model, and if it is possible to formulate the conditions that should be met for the model to be stable.


Thus, let's assume all chunks $D_1, \dots, D_{L+1}$ are drawn from the exact same stationary distribution $P$.
Because the underlying distribution does not change, the perfectly retrained model (by \emph{SW}) at any step $t$ essentially aims for the same global optimum $\theta_P^*$.
Therefore, the target does not move: $\theta_t^* \approx \theta_{t-1}^* \approx \theta_P^*$.

Even though the target $\theta_P^*$ does not change, our operations still introduce errors. Unlearning $D_t$ introduces $\epsilon_{\text{unlearn}}$, and incrementally training $D_{L+t}$ introduces $\epsilon_{\text{forgetting}}$.
If we rely strictly on the previous additive bound ($e_t \leq e_{t-1} + \delta$), the error would continually grow even when the data is perfectly stable. The model's weights would undergo a 
radnom perturbations slowly drifting away from $\theta_P^*$ purely due to algorithmic noise.

To keep the error at the same level (or minimize it), \emph{UIL} should pull the model back toward $\theta_P^*$ stronger than the unlearning step pushes it away. This restoring force comes from the incremental training step. Let's remain that  $\ell(.,\theta)$ is $\mu$-strongly convex and \emph{EIL} uses a learning rate $\eta$. According to standard optimization theory, applying SGD on a $\mu$-strongly convex loss landscape contracts the distance to the optimum by a factor of $(1 - \eta \mu)$, where $\eta \mu < 1$(see~\cite{Rakhlin:2012}).

Let $e_{t-1} = \|\theta_{t-1} - \theta_P^*\|_2$ be the parameter error at the previous step. During the unlearning step \emph{UIL} removes the influence of $D_t$. This perturbs the model, adding the approximation error $e_{\text{after\;unlearning}}\leq e_{t-1} + \epsilon_{\text{unlearn}}$. Then \emph{UIL} trains the model on the new chunk $D_{L+t}$ (which is drawn from the same distribution $P$). Because the data is from the true distribution, SGD actively pulls the weights toward $\theta_P^*$, shrinking the current error by $(1 - \eta \mu)$. However, because we only train on $D_{L+t}$ and not the whole window, the model suffers from \emph{catastrophic forgetting}, so the small forgetting variance $\epsilon_{\text{forgetting}}$ should be considered

\begin{equation*}
e_t \leq (1 - \eta \mu)( e_{t-1} + \epsilon_{\text{unlearn}}) + \epsilon_{\text{forgetting}}.
\end{equation*}


Taking this into consideration, we may formulate
 the \textit{stability condition}. To keep performance at the same level, the error at step $t$ must be no greater than the error at step $t-1$ ($e_t \leq e_{t-1}$). In other words



\begin{equation}
\label{eq:stability}
\eta \mu e_{t-1} \geq (1 - \eta \mu)\epsilon_{\text{unlearn}} + \epsilon_{\text{forgetting}}.    
\end{equation}

For the model to remain stable on a stationary data stream, the condition above must be met, i.e., the corrective pull of the new data ($\eta \mu e_{t-1}$) must be strictly greater than the combined noise introduced by unlearning and incremental forgetting. Let us also observe that during the incremental training phase on the new chunk $D_{L+t}$, you must use a high enough learning rate so that the model actively "corrects" the unlearning noise, but not so high that $\eta \mu > 1$ (which would cause SGD to diverge). Additionally, if the model starts perfectly ($\theta_0 = \theta_P^*$, so $e_0 = 0$), the error will initially grow. 

Additionally, we may find the maximum stable error as $t \to \infty$, i.e., the fixed point where the error stops growing. Based on eq. \ref{eq:stability} and observe that for 
$t \to \infty$ $e_t = e_{t-1} = e_{\infty}$, where

\begin{equation}
e_{\infty} = \frac{(1 - \eta \mu)\epsilon_{\text{unlearn}} + \epsilon_{\text{forgetting}}}{\eta \mu}~.
\end{equation}

\subsection{Runtime analysis}

Let us address the last research question regarding the computational cost of \emph{UIL} compared to \emph{SW}. In this analysis, we will not use asymptotic Big-$\mathcal{O}$ notation, because it may obscure the practical performance difference. Some works on the optimization and deep learning scaling literature, as \cite{Johnson:2013,Kaplan:2020}, evaluated computational complexity by formulating a cost function $\mathcal{C}$ that estimates the exact number of parameter gradient evaluations required.

For \emph{SW}, a model is trained from scratch on the active window $W_t$ using Stochastic Gradient Descent (SGD) for $E_{ret}$ epochs. The cost of processing a single data point (forward and backward pass) scales linearly with the number of parameters \cite{Bottou:2010}. Because the algorithm must pass over the entire window of size $L \cdot m$ (where $L$ is the number of chunks and $m$ is the chunk size) for $E_{ret}$ training epochs, the total computational cost is estimated as:

$$\mathcal{C}_{SW} \approx L \cdot m \cdot E_{ret} \cdot p~.$$

For \emph{UIL}, the adaptation process consists of unlearning the oldest data chunk using an approximate Newton step (or Influence Function) and then updating the model with the incoming data chunk using SGD for $E_{inc}$ epochs. To establish the total cost, we first decompose the unlearning phase into two distinct operations: the linear gradient computation and the constant-cost curvature approximation.

The first step in a Newton-based unlearning update is calculating the first-order gradient of the loss function for the outdated data chunk. This determines the precise parameter direction to subtract from the model. Computing this requires a standard forward and backward pass over the chunk, resulting in a cost of $m \cdot p$. The second step requires preconditioning the gradient using the inverse Hessian matrix to account for the loss landscape's curvature. Computing an exact inverse Hessian is $\mathcal{O}(p^3)$, which is intractable for deep networks. However, following the influence function approximations \cite{Koh:2017}, we can estimate the inverse Hessian-Vector Product using the Conjugate Gradient method. Crucially, this solver does not need to process the entire retained data window. Instead, it operates on a small, fixed random subsample (of size $k$) for $i$ iterations. Therefore, the cost of the inverse Hessian-Vector Product estimation is strictly bounded by $k \cdot i \cdot p$.

Finally, \emph{UIL} fine-tunes the model on the new data chunk using SGD for $E_{inc}$ epochs. The cost of this incremental step is $m \cdot E_{inc} \cdot p$. Notably, incremental training requires far fewer epochs than training from scratch ($E_{inc} \ll E_{ret}$) \cite{Castro:2018}. Summing these components yields the total computational cost for \emph{UIL}:

$$\mathcal{C}_{UIL} \approx \big( (1 + E_{inc}) \cdot m + k \cdot i \big) \cdot p~.$$



\subsection{Discussion}

Based on the analytical study, we can draw the following conclusions.

\noindent \textbf{Relation between error rate and window size.} Analyzing the behavior of \emph{UIL}, we may observe that the error increases quadratically with the size of the window $L$, which leads to the following observations: (i) if $L$ is small or $\delta$ (related to unlearning and catastrophic forgetting) is negligible, then \emph{UIL} may run indefinitely; and (ii) if $L$ is large, then the cumulative error may cause the model to deviate too much from the optimal state. Therefore, an anchor mechanism should be provided to periodically retrain the model and remove the influence of cumulative error. (RQ1 answered).

\noindent \textbf{Behavior under stationary continual streams.} Analyzed the \emph{UIH} on a stationary data stream, and we proved that the "noise" introduced by constantly unlearning and relearning data from the same distribution will not accumulate to infinity, and that \emph{UIL} is robust and stable even when the data stream is stationary. (RQ2 answered).

\noindent \textbf{Insights into computational complexity.} Although the computational complexity of both SW and UIL scales linearly with the number of model parameters p, the practical computational load differs significantly. The $ p$-multiplier is significantly smaller for UIL than for SW. More importantly, $\mathcal{C}_{UIL}$ is completely independent of the window size $L$. Thus, the computation cost \emph{SW} increases linearly with $L$. In contrast, the adaptation cost \emph{UIL} remains safely limited by the fixed size of the data chunk ($m$), so especially for large window sizes (measured in chunks), the difference in the number of calculations is significantly smaller for \emph{UIL}, i.e., $\mathcal{C}_{UIL} < \mathcal{C}_{SW}$. (RQ3 answered).


\noindent \textbf{On the impact of used optimizers.} While the computational cost bound is derived under the assumption of standard Stochastic Gradient Descent (SGD), modern deep learning implementations typically employ adaptive optimizers such as Adam~\cite{kingma:2015}. In the context of our framework, replacing SGD with Adam strictly amplifies the computational advantage. Because the unlearning step of \emph{UIL} systematically removes obsolete concepts while preserving generalized feature extractors, Adam's adaptive momentum terms immediately align with the new distribution. Consequently, the number of incremental epochs ($E_{inc}$) required for convergence drops drastically compared to full retraining with a random initialization ($E_{ret}$). Thus, in practical deployments, the empirical speedup of the proposed method significantly exceeds the baseline theoretical bounds.

\noindent \textbf{Insights into the used unlearning mechanism.} Additionally, it is worth noting that the theoretical considerations underlying the choice of an unlearning mechanism that uses iterative Hessian-Vector Products, which are highly precise and minimize $\epsilon_{unlearn}$. On the other hand, utilizing a diagonal Empirical Fisher Information Matrix~\cite{Golatkar:2019} eliminates the iterative solver, reducing the complexity to strictly $\mathcal{O}((m + i) \cdot p)$. However, the acceleration in computation comes at the cost of a less accurate curvature approximation, which negatively affects the unlearning error. It also increases the error threshold for catastrophic forgetting during learning phases under a stationary distribution. Nevertheless, the Empirical Fisher Information Matrix offers a practical solution for large-scale deep architectures where exact Hessian-vector product iterations are a bottleneck.

\section{Experimental evaluation}

We conducted a series of illustrative experiments to evaluate the impact of the selected parameters, such as learning and unlearning rates and window length, on the \emph{UIL} behaviour, particularly, we tried to answer the following research questions:

\begin{itemize}
    \item[RQ1] Is \emph{UIL} comparable to \emph{SW}, especially in terms of computational time and recovery analysis?
    \item[RQ2] Can the find a parameter setting to find a stable \emph{IIT} performance?
\end{itemize}

\subsection{Experimental set-up}

\begin{figure}
    \centering
    \includegraphics[width=0.8\linewidth]{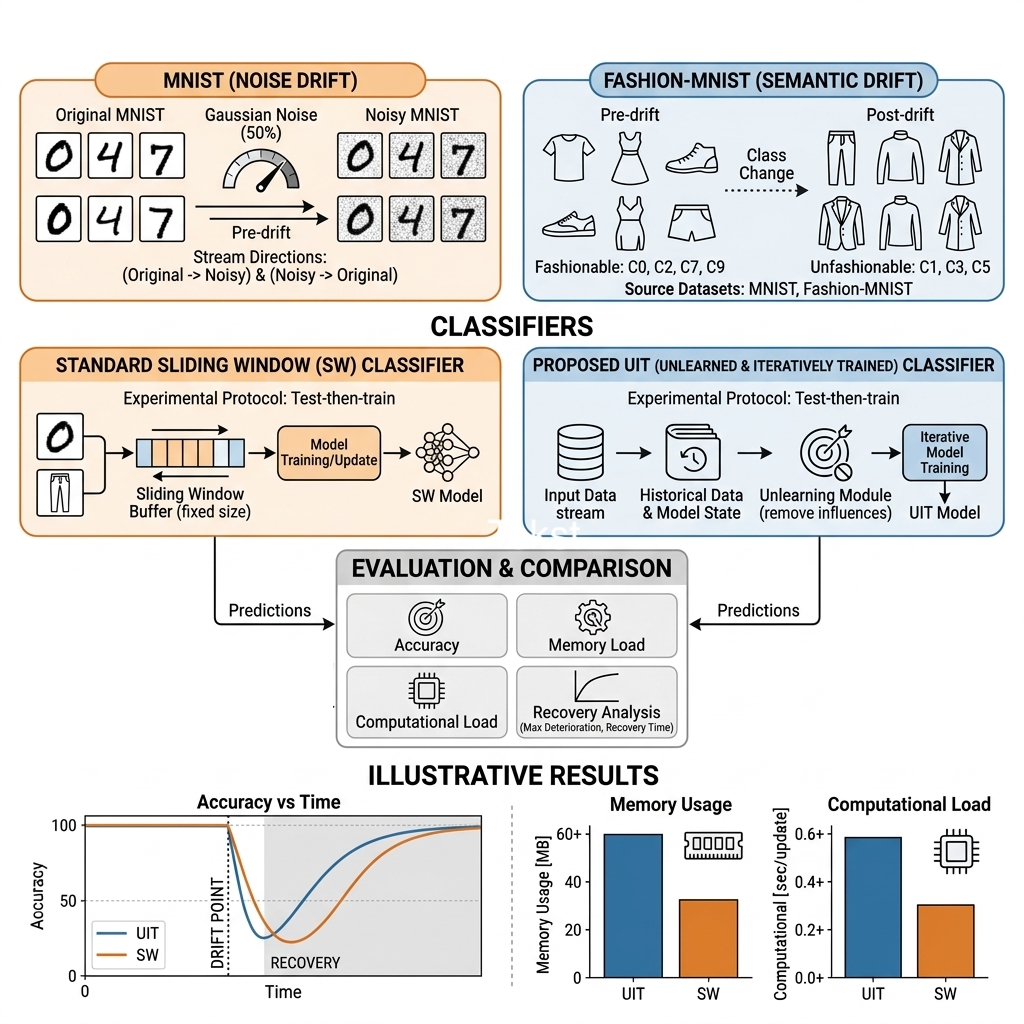}
    \caption{The proposed experimental pipeline.}
    \label{fig:pipeline}
\end{figure}    

\subsubsection{Data streams}
Because there is almost no image data stream, simulating an incremental environment requires processing existing benchmark datasets. Therefore, to evaluate the proposed methods, we created several data streams from well-known benchmark datasets – MNIST \cite{mnistdataset} and Fashion-MNIST \cite{Xiao:2017}. 
For the MNIST dataset, we first prepared a noisy version by adding Gaussian noise with 50\% intensity to the images. To obtain data streams, objects were initially randomly selected from the original MNIST and then from the noisy MNIST. Then we repeated the experiment in reverse order: noisy images appeared first in the stream, followed by noise-free images after the drift occurred.
For Fashion-MNIST, we applied the so-called semantic drift. We assigned clothing types to the fashionable and unfashionable categories. Firstly, the first class contains objects from classes 0, 2, 4, 7, and 9. After the appearance of the drift, it contains classes 1, 3, and 5.
The experimental pipeline is presented in Fig. \ref{fig:pipeline}.

\subsubsection{Metrics.} We use the following evaluation metrics: (i) accuracy, (ii) time per chunk to evaluate time cost, (iii) Data [MB per batch], (iv) recovery time, (v) maximal deterioration; and (vi) mean deterioration. We use recovery analysis \cite{Shaker2015} methods in order to compare how Sliding Window and implemented \emph{UIL} algorithm adapted to changes in data drift.

\subsubsection{Evaluation details.} For all datasets, we use ResNet18 architecture \cite{He:2016}. We searched for hyperparameters for \emph{UIL} and used the resulting values for all experiments (details are included in the repository related to this paper). As the base unlearning algorithm we used Certified Data Removal~\cite{Guo:2020}. All experiments were based on the \emph{Test-Then-Train}~\cite{Krawczyk:2017} evaluation protocol. All implementations were done using Python environment with scikit-learn~\cite{scikit-learn} and stream learn~\cite{ksieniewicz2020stream} packages.






\begin{figure}[ht]
\centering

\begin{subfigure}{0.32\textwidth}
    \centering
    \includegraphics[width=\linewidth]{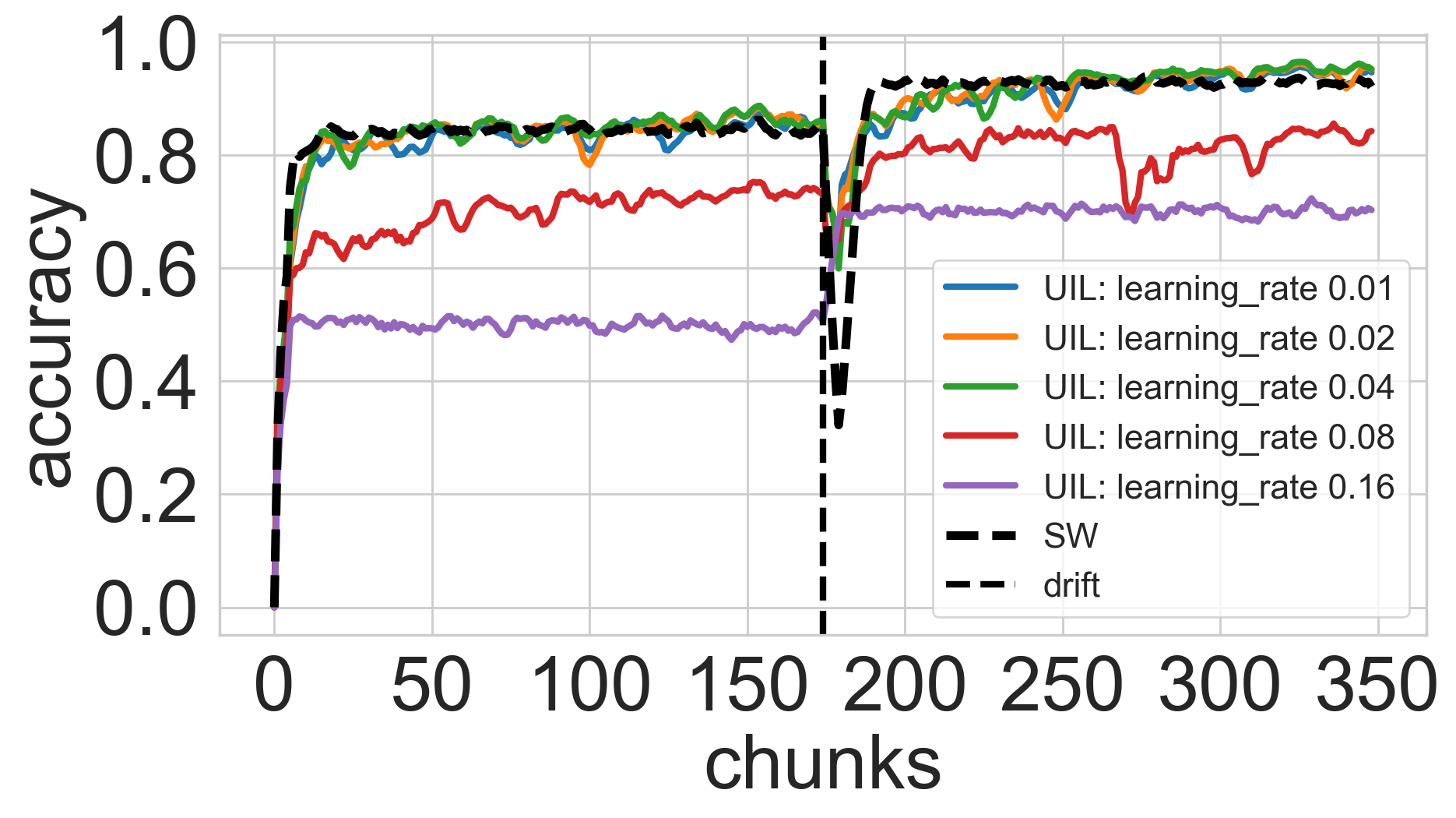}
    \caption{}
\end{subfigure}
\hfill
\begin{subfigure}{0.32\textwidth}
    \centering
    \includegraphics[width=\linewidth]{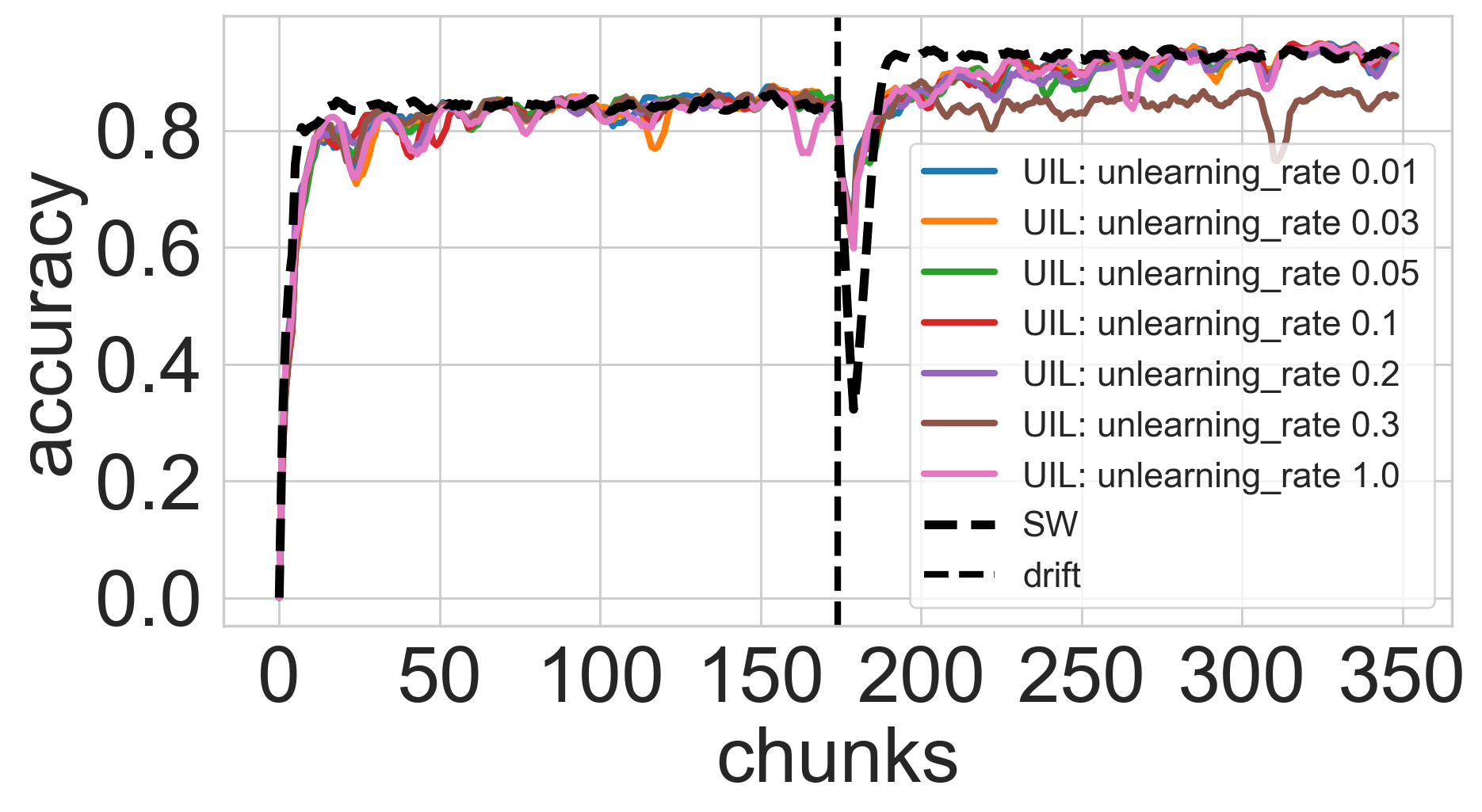}
    \caption{}
\end{subfigure}
\hfill
\begin{subfigure}{0.32\textwidth}
    \centering
    \includegraphics[width=\linewidth]{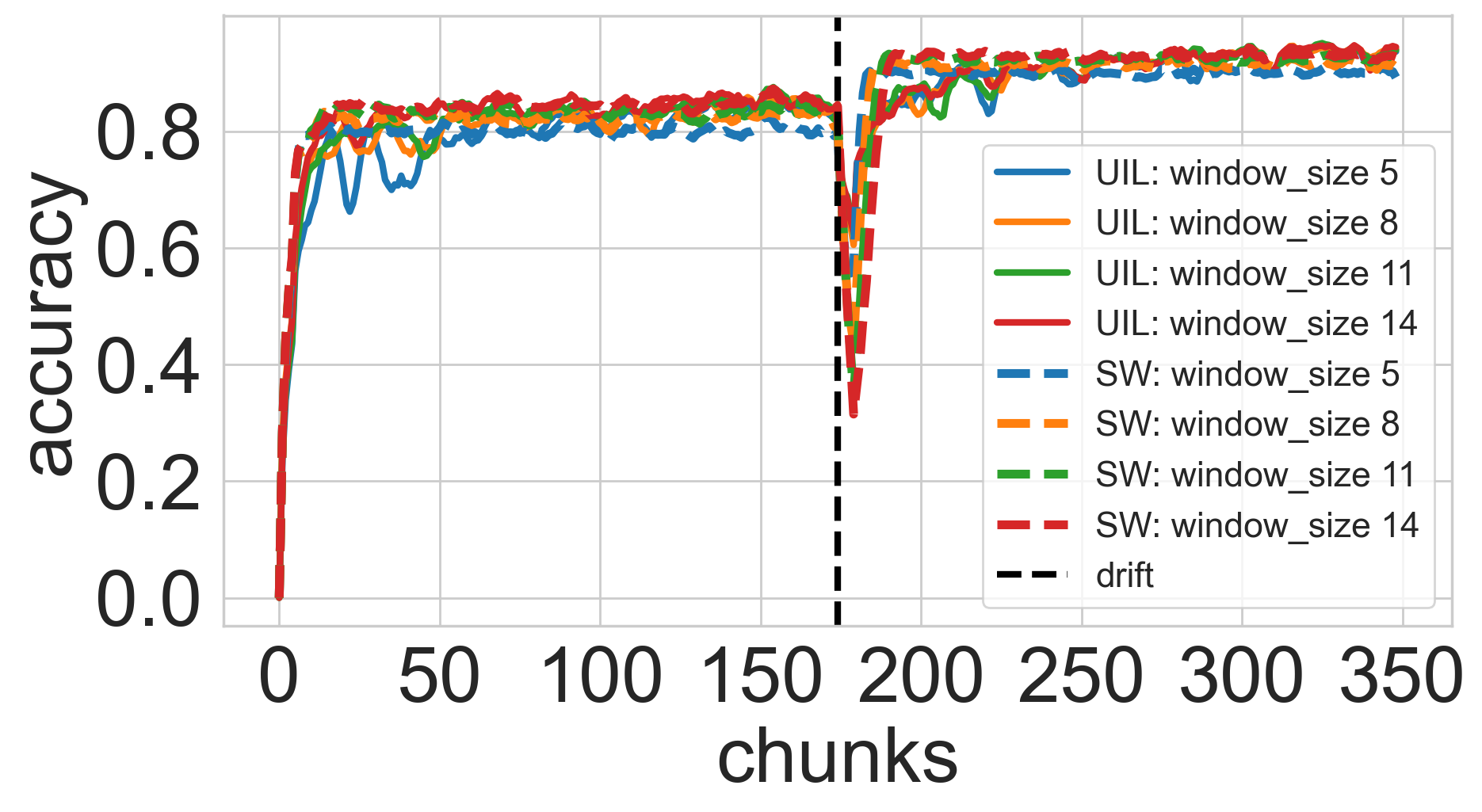}
    \caption{}
\end{subfigure}

\vspace{0.5cm}
\begin{subfigure}{0.32\textwidth}
    \centering
    \includegraphics[width=\linewidth]{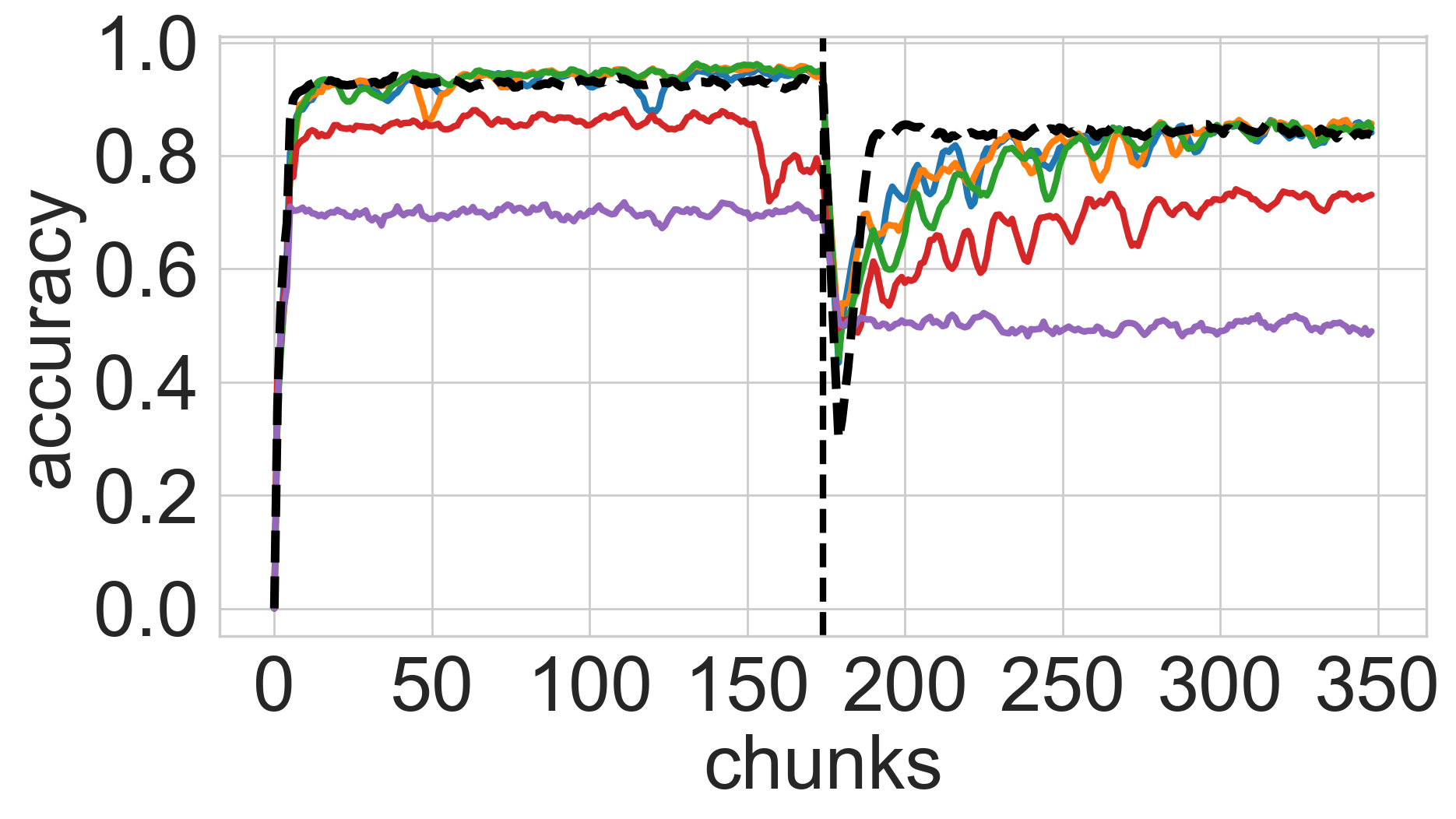}
    \caption{}
\end{subfigure}
\hfill
\begin{subfigure}{0.32\textwidth}
    \centering
    \includegraphics[width=\linewidth]{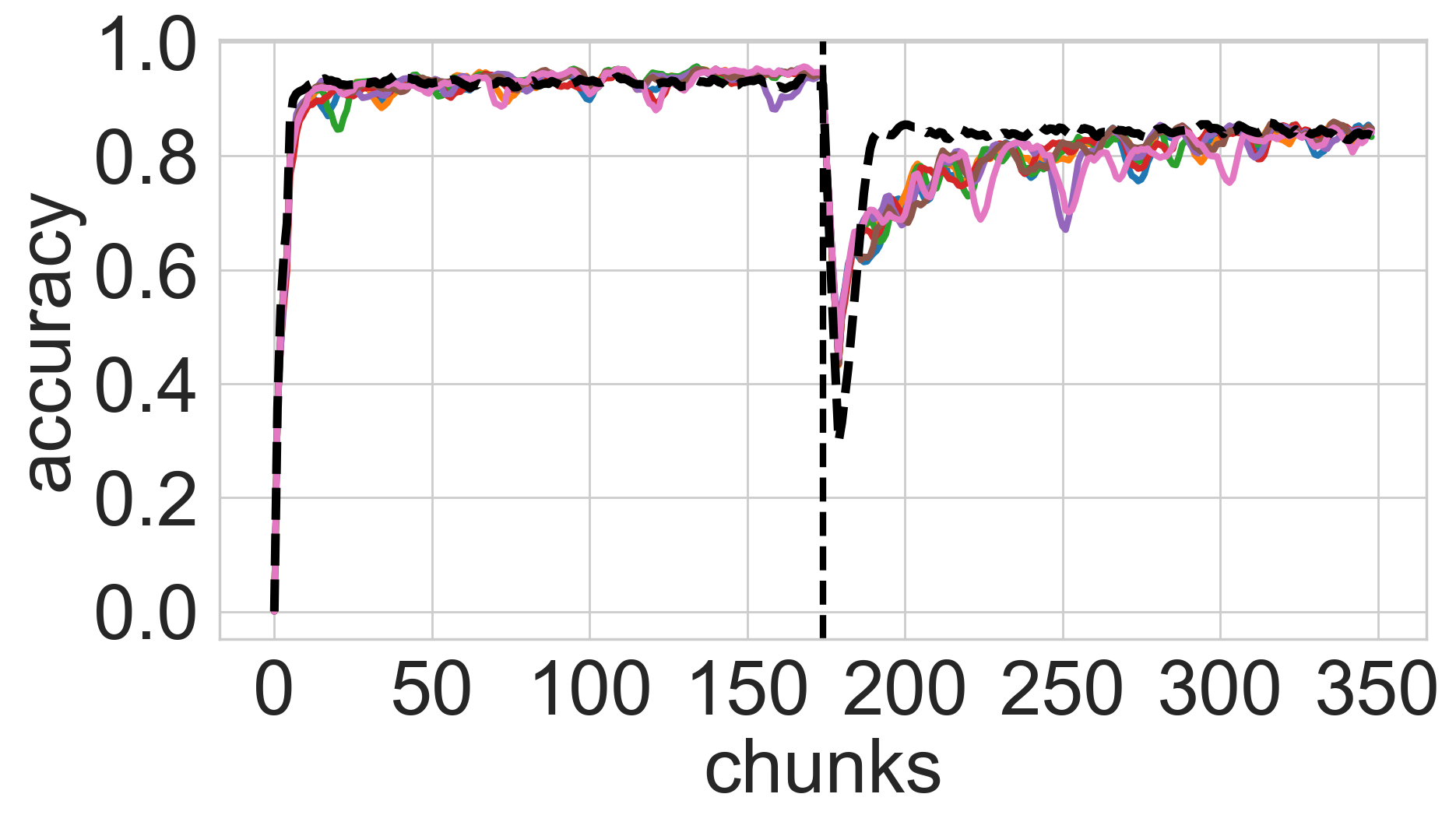}
    \caption{}
\end{subfigure}
\hfill
\begin{subfigure}{0.32\textwidth}
    \centering
    \includegraphics[width=\linewidth]{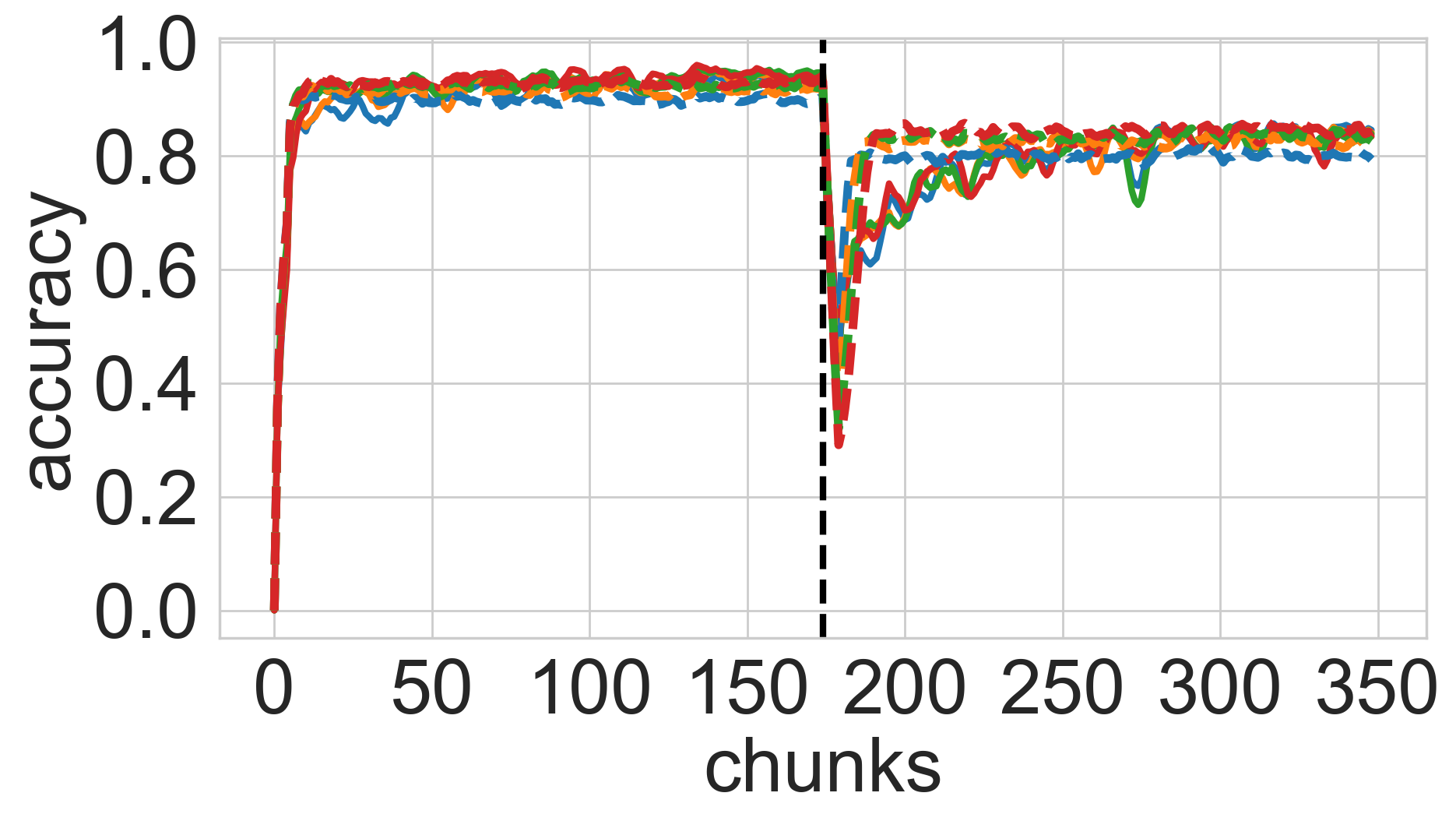}
    \caption{}
\end{subfigure}

\caption{Comparison of \emph{SW} and \emph{UIL} on semantic Fashion MNIST data stream: a, c, e) [0, 2, 4, 7] → [1, 3, 5], and  b, d, f) [0, 2, 4, 7] → [1, 3, 5]}
\label{fig:fashion_accuracy}
\end{figure}

\begin{figure}[!ht]
\centering

\begin{subfigure}{0.31\textwidth}
\centering
\includegraphics[width=\linewidth]{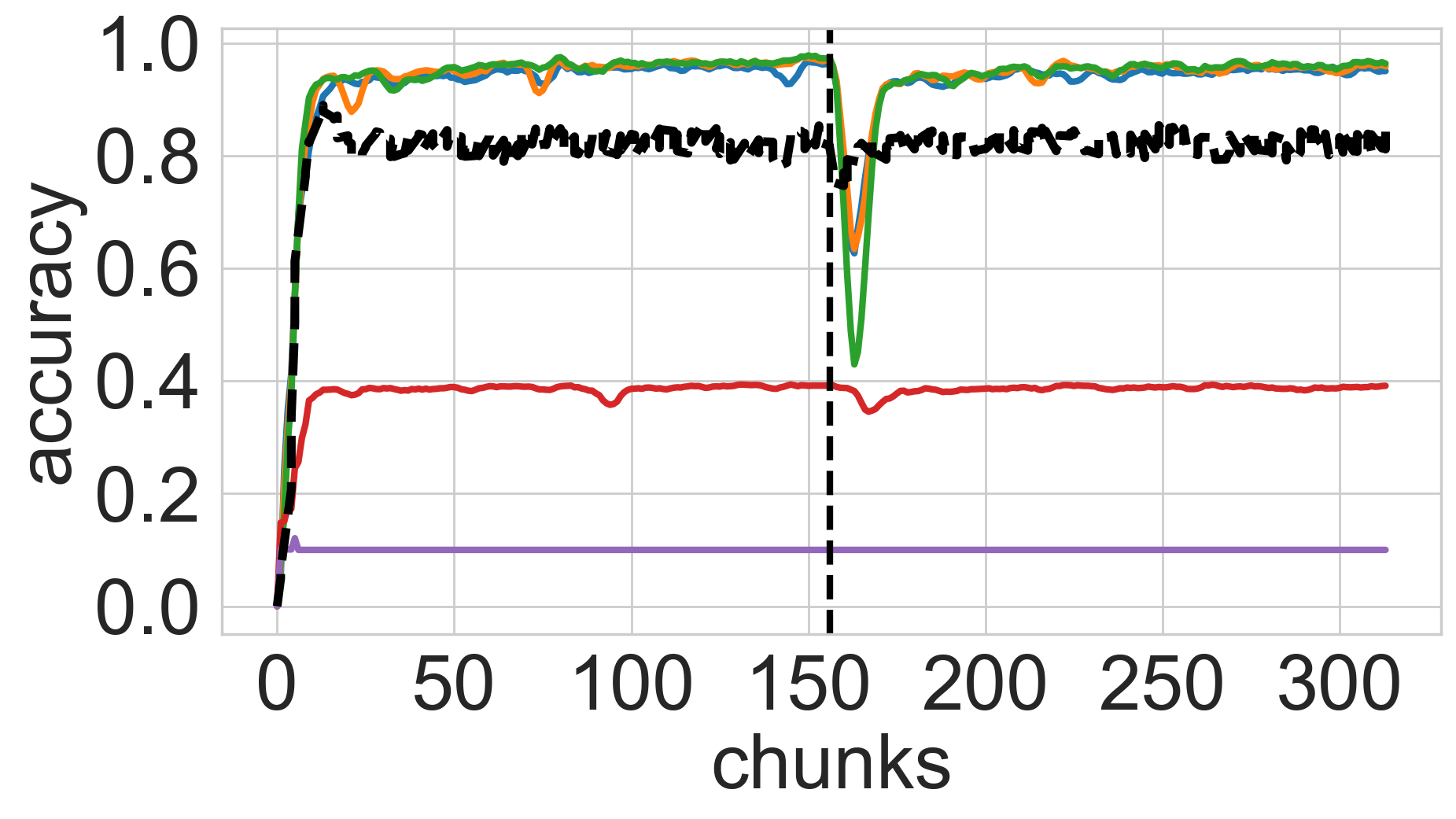}
\caption{}
\end{subfigure}
\hfill
\begin{subfigure}{0.31\textwidth}
\centering
\includegraphics[width=\linewidth]{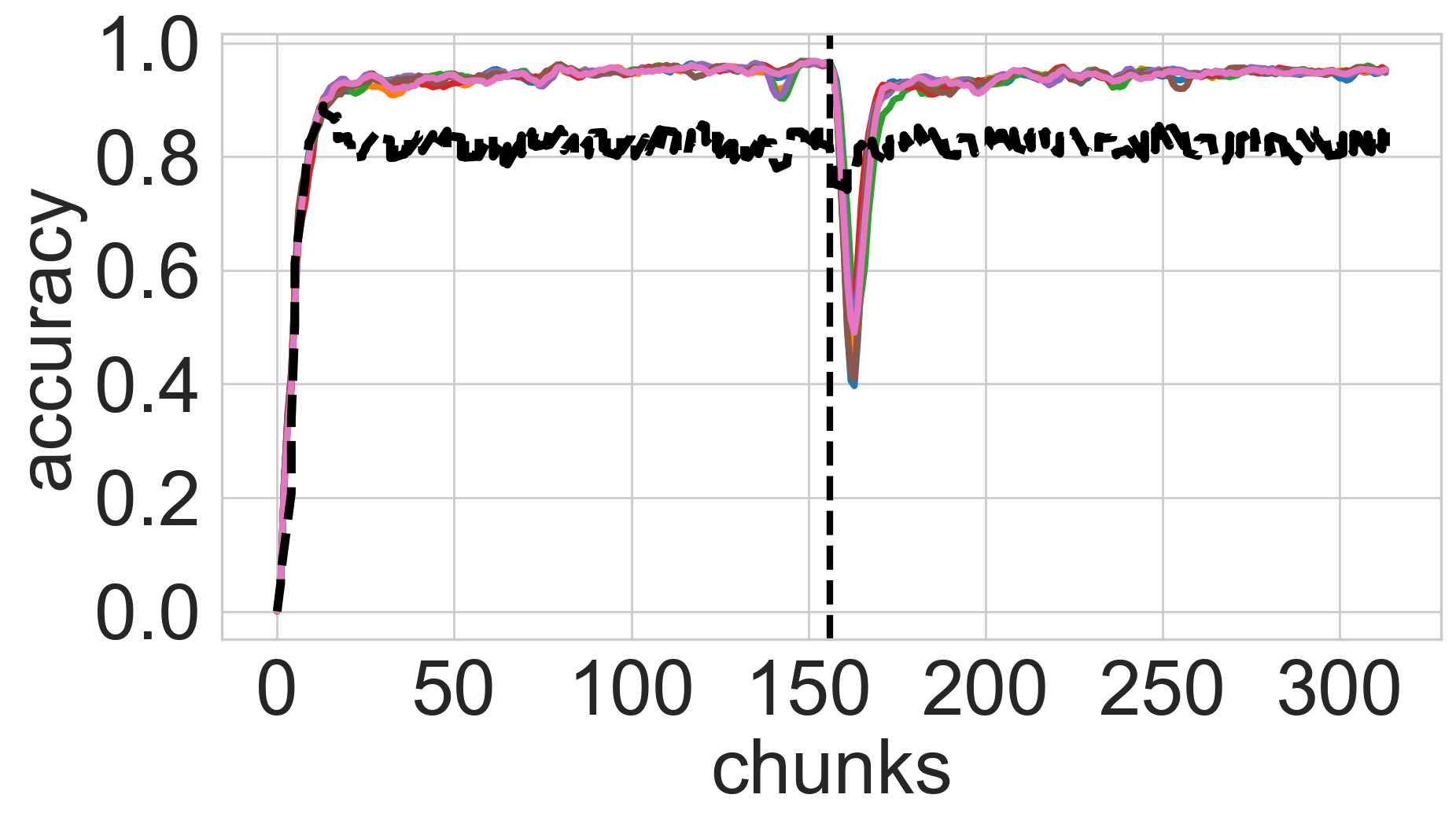}
\caption{}
\end{subfigure}
\hfill
\begin{subfigure}{0.31\textwidth}
\centering
\includegraphics[width=\linewidth]{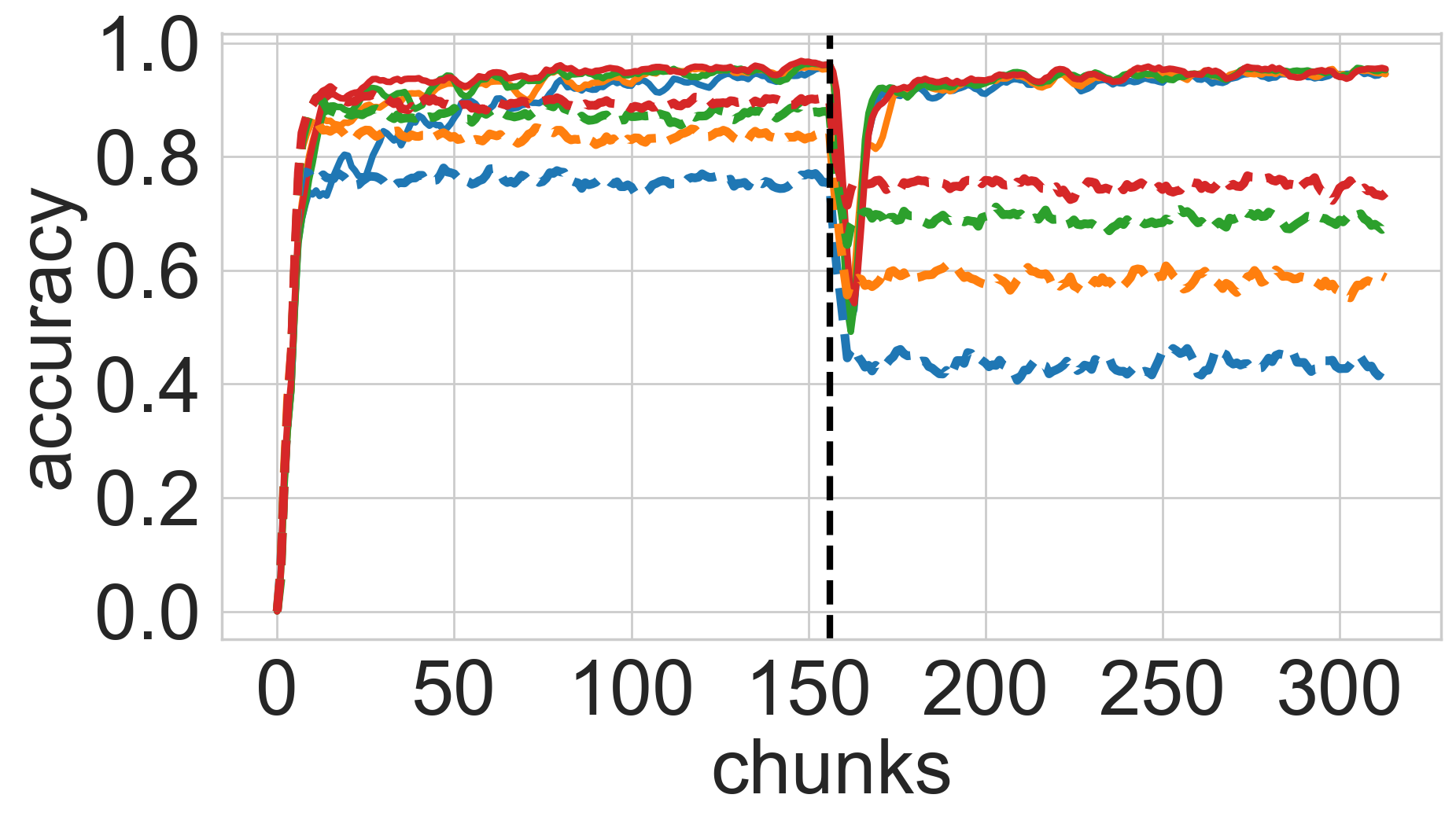}
\caption{}
\end{subfigure}

\vspace{0.5cm}
\begin{subfigure}{0.31\textwidth}
\centering
\includegraphics[width=\linewidth]{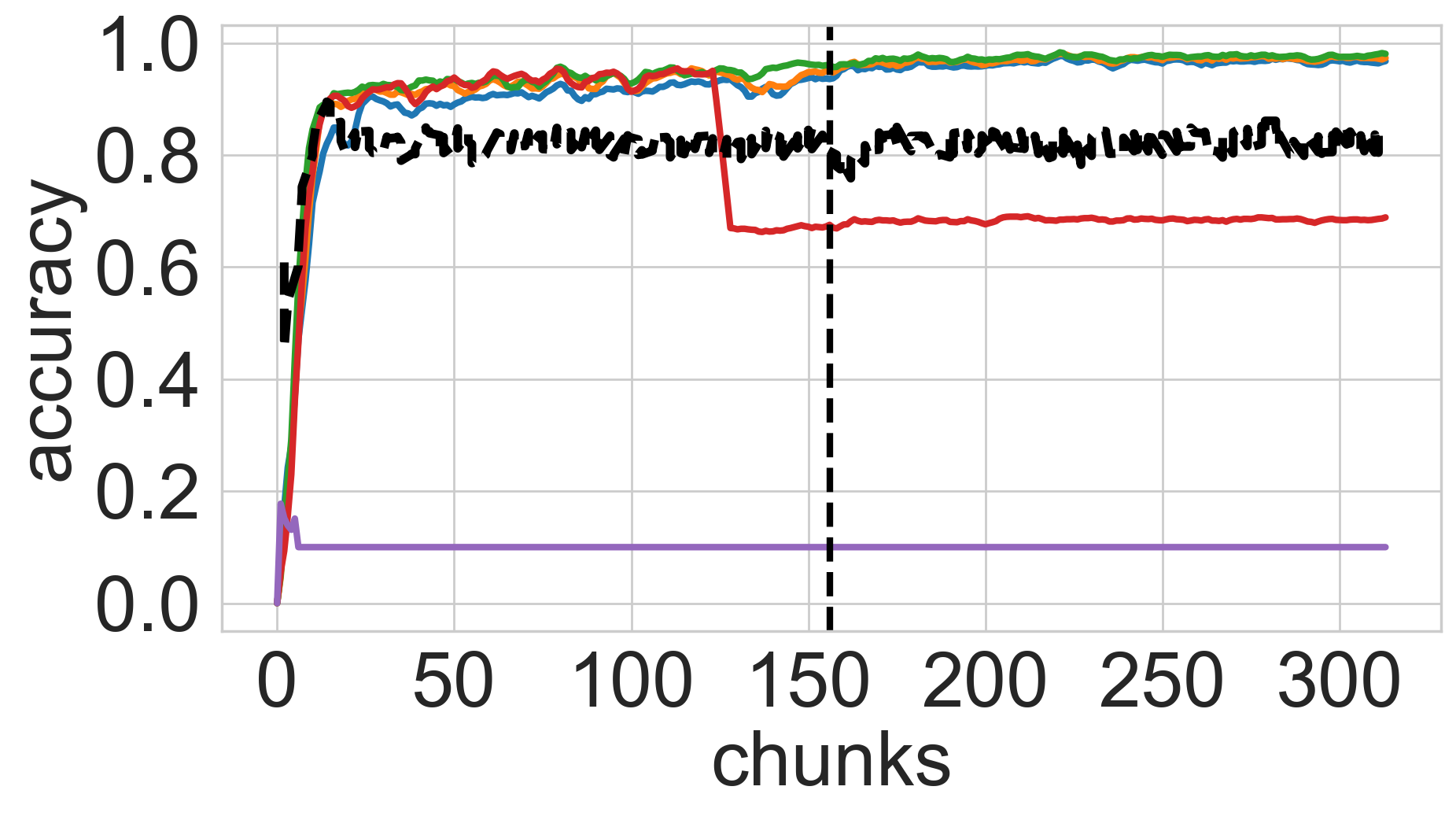}
\caption{}
\end{subfigure}
\hfill
\begin{subfigure}{0.31\textwidth}
\centering
\includegraphics[width=\linewidth]{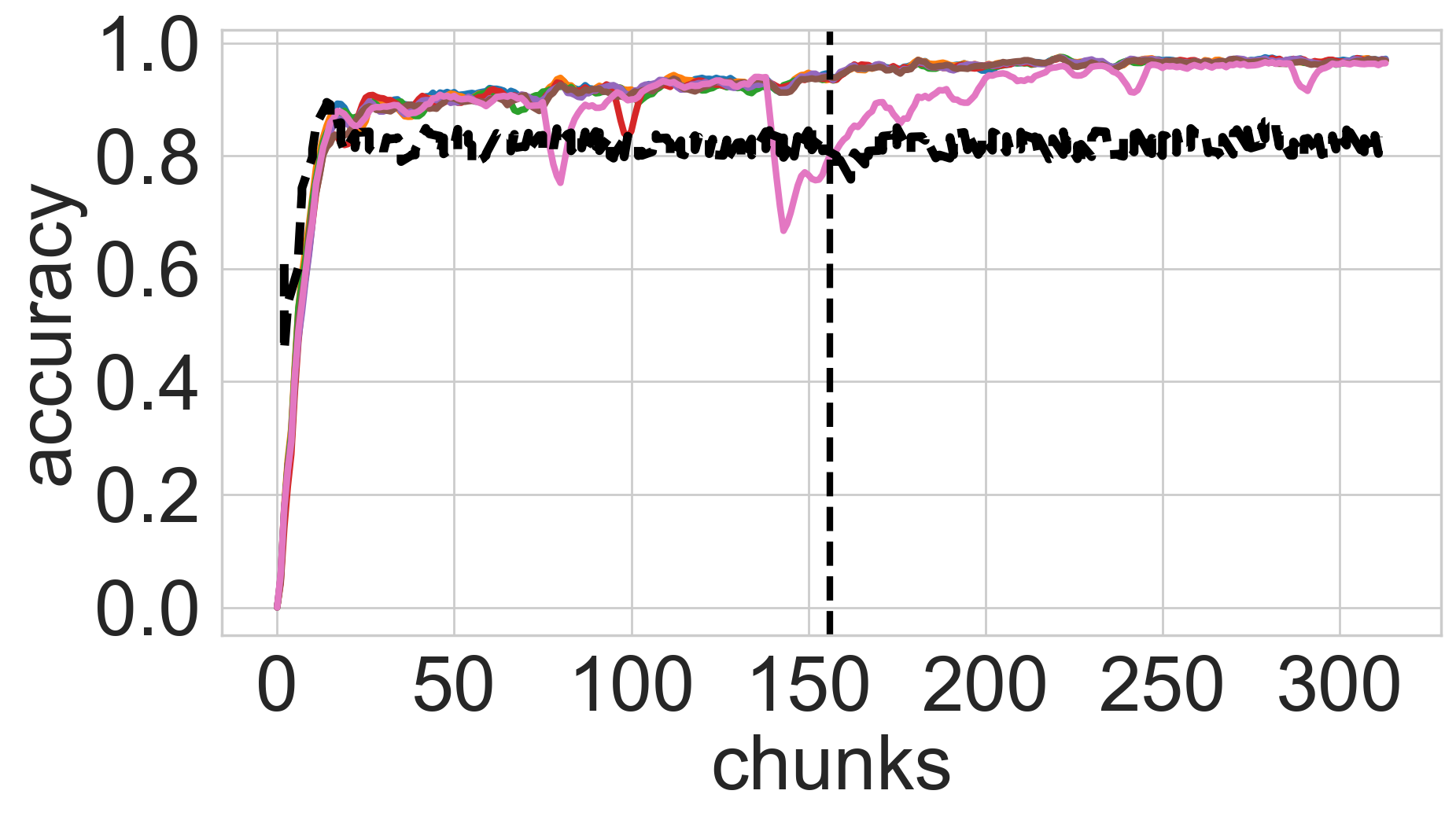}
\caption{}
\end{subfigure}
\hfill
\begin{subfigure}{0.31\textwidth}
\centering
\includegraphics[width=\linewidth]{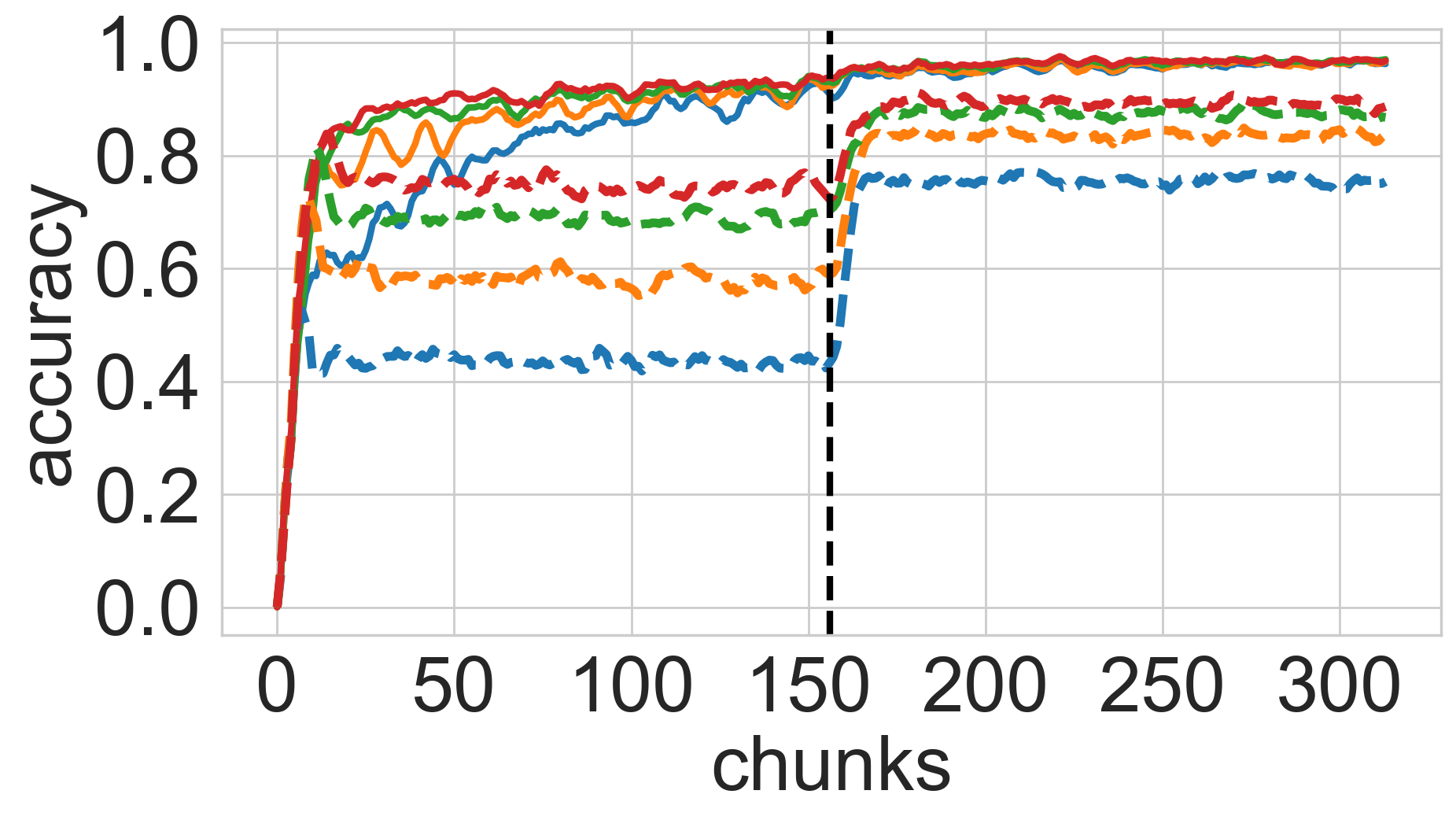}
\caption{}
\end{subfigure}

\caption{Comparison of \emph{SW} and \emph{UIL} on sudden MNIST data stream:  a, c, e) noise level 0.0 → 0.5, and  b, d, f) 0.5 → 0.0. }
\label{fig:mnist_accuracy}
\end{figure}


\begin{table}[!ht]
\caption{Comparisson time, data and recovery time efficiency on: A) semantic Fashion MNIST data stream [0, 2, 4, 7] → [1, 3, 5], B) semantic Fashion MNIST data stream [1, 3, 5] → [0, 2, 4, 7], C) sudden MNIST data stream (0.0 → 0.5), D) sudden MNIST data stream (0.5 → 0.0).}
\centering
\setlength{\arrayrulewidth}{0.4pt}
\begin{tabular}{lcccc!{\vrule width 1.2pt}cccc}
\hline
 & \multicolumn{4}{c}{\emph{SW}} & \multicolumn{4}{c}{\emph{UIL}} \\

 & \multicolumn{2}{c}{Fashion MNIST} & \multicolumn{2}{c}{MNIST}
 & \multicolumn{2}{c}{Fashion MNIST} & \multicolumn{2}{c}{MNIST} \\

 & A & B & C & D & A & B & C & D \\ 
\hline

Time [s per batch]      & 6.27 & 6.27 & 6.67 & 6.67 &  2.04 & 2.04 & 2.19 & 2.18 \\
Data [MB per batch]    & 196.47  & 196.49 & 198.47 & 198.46 & 188.02 & 188.03 & 188.02 & 188.03 \\
Recovery [chunks]       & 10.33 &  13.33 & 17.00 & 4.00 & 13.00 & 47.00 & 6.33 & 1.00 \\
Detoriation max [\%]    & 69.21 & 72.35 & 69.21 & 0.41 & 47.54 & 63.23 & 53.52 & 4.87 \\
Detoriation avg [\%]    & 7.923 & 7.68 & 16.72  & 17.73 & 7.87 & 9.68 & 4.36 & 2.70 \\

\hline
\end{tabular}
\label{tab:res}
\end{table}

\subsection{Results}

The results in Figs.~\ref{fig:fashion_accuracy} and~\ref{fig:mnist_accuracy} together with Table~\ref{tab:res} show that \emph{UIL} provides a favorable trade-off between adaptation quality and computational efficiency. Across both semantic drift on Fashion-MNIST and sudden noise-based drift on MNIST, \emph{UIL} remains competitive with the standard sliding-window baseline, while reducing the per-batch processing time by roughly a factor of three. In particular, SW requires about 6.27--6.67 seconds per batch, whereas \emph{UIL} requires only 2.04--2.19 seconds, confirming the practical efficiency gains predicted by the analytical study. \emph{UIL} also consistently uses slightly less memory per batch than SW, though this difference is modest compared to the runtime gain. It is also worth noting that, for most tested parameter values, \emph{UIL} remains stable, with instability observed only at very high learning rates. This observation is consistent with the conclusions of our earlier analysis. 

A more nuanced picture emerges from the recovery analysis. On the MNIST sudden-drift streams (Fig.~\ref{fig:mnist_accuracy}), \emph{UIL} often recovers faster than SW, requiring only 6.33 chunks versus 17.00 for the \(0.0 \rightarrow 0.5\) shift and 1.00 versus 4.00 for the \(0.5 \rightarrow 0.0\) shift. This suggests that targeted removal of outdated information can accelerate adaptation when the drift primarily affects low-level visual statistics. On Fashion-MNIST semantic drift (Fig.~\ref{fig:fashion_accuracy}), however, the behavior is less stable: \emph{UIL} remains faster computationally, but its recovery is slower than \emph{SW}, especially in scenario B, where recovery increases from 13.33 to 47.00 chunks. This indicates that semantic drift is more challenging for approximate unlearning, likely because the shift affects higher-level class structure rather than only input corruption. 

The deterioration measures in Table~\ref{tab:res} are consistent with this interpretation. \emph{UIL} generally reduces the worst deterioration relative to SW in the semantic Fashion-MNIST streams and in the MNIST \(0.0 \rightarrow 0.5\) setting, but this advantage is not uniform across all scenarios. Overall, the experiments suggest that \emph{UIL} is a strong alternative to sliding-window retraining when computational budget is limited, and it is particularly promising in sudden-drift scenarios where fast removal of obsolete information is more important than exact reconstruction of the full-window model. At the same time, the weaker recovery in one of the semantic drift settings highlights that the quality of the unlearning step remains a key factor in more structurally complex shifts. 



\section{Conclusion}

\noindent \textbf{Conclusions.} This paper presented \emph{UIL}, a new approach to concept drift that views nonstationary data streams as a \emph{task-free continual learning} problem and uses machine unlearning as a mechanism for simulated forgetting. By removing the influence of outdated data instead of retraining from scratch on a sliding window, \emph{UIL} offers a computationally efficient alternative to standard stream adaptation strategies. Our theoretical analysis showed that the resulting approximation error remains bounded, and the empirical study demonstrated that this approach can achieve competitive performance under sudden drift at substantially lower computational cost.

\noindent \textbf{Future works.} We envision developing this idea towards three directions: 

\noindent $\bullet$ \emph{Controlling unlearning with drift detectors.} Currently, \emph{UIL} employs the static unlearning rate. To prevent the accumulation of algorithmic noise during prolonged stationary phases, future work will integrate dynamic drift detectors. By monitoring distributional shifts—such as computing the Wasserstein distance or Kullback-Leibler (KL) divergence on the model's low-dimensional latent representations, we can dynamically scale the unlearning rate. This will allow the model to actively "pause" unlearning when drift stops.

\noindent $\bullet$  \emph{Unlearning for regularization-based continual learning.} To mitigate catastrophic forgetting variance ($\epsilon_{forgetting}$) during incremental learning of new chunks, we plan to focus on regularization techniques, such as Elastic Weight Consolidation (EWC)~\cite{Kirkpatrick:2017}. We aim to exploit a mathematical synergy within our framework: the approximate inverse Hessian matrix utilized during the unlearning step can serve as an empirical approximation of the Fisher Information Matrix required by EWC. This dual use of the curvature matrix would allow us to heavily penalize forgetting of stationary knowledge during the incremental update step, effectively neutralizing $\epsilon_{forgetting}$ with near-zero additional overhead.

\noindent $\bullet$ \emph{Unlearning for memory buffers.} Finally, we will explore the integration of strictly bounded, density-based memory buffers to seamlessly bridge the gap between active unlearning and traditional continual learning, providing an empirical anchor for the model's decision boundaries during complex drifts~\cite{Krawczyk:2017}.

\subsubsection*{Declaration. } We acknowledge the use of (i) Grammarly and Gemini to polish the selected parts of the manuscript, (ii) PaperBanana to prepare the figures.

\bibliographystyle{plain}
\bibliography{bibliography}

\end{document}